\documentclass[letterpaper]{article}
\usepackage{aaai2026}
\usepackage{times}
\usepackage{helvet}
\usepackage{courier}
\usepackage[hyphens]{url}
\usepackage{graphicx}
\usepackage{natbib}
\usepackage{caption}
\usepackage{algorithm}
\usepackage{algorithmic}
\usepackage{amsmath}
\usepackage{amssymb}
\usepackage{amsthm}
\usepackage{booktabs}
\usepackage{xcolor}
\usepackage{tikz}
\usetikzlibrary{shapes,arrows.meta,positioning,fit,backgrounds,calc}

\newtheorem{proposition}{Proposition}

\newcommand{\TRACE}{\mbox{\textsc{trace}}}

\nocopyright

\title{The Evaluation Differential:\\When Frontier AI Models Recognise They Are Being Tested}

\author{
    Varad Vishwarupe\textsuperscript{\rm 1,\rm 2,\rm 3,\rm *},
    Nigel Shadbolt\textsuperscript{\rm 1,\rm 2},
    Marina Jirotka\textsuperscript{\rm 1,\rm 3},
    Ivan Flechais\textsuperscript{\rm 1}
}

\affiliations{
    \textsuperscript{\rm 1}Department of Computer Science, University of Oxford\\
    \textsuperscript{\rm 2}Institute for Ethics in AI, University of Oxford\\
    \textsuperscript{\rm 3}Responsible Technology Institute, University of Oxford\\
    \textsuperscript{\rm *}Corresponding author: \texttt{varad.vishwarupe@cs.ox.ac.uk}
}

\begin{document}

\maketitle

\begin{abstract}
\noindent Recent published evidence from frontier laboratories shows that contemporary AI models can recognise evaluation contexts, latently represent them, and behave differently under those contexts than under deployment-continuous conditions. Anthropic's BrowseComp incident, the Natural Language Autoencoder findings on SWE-bench Verified and destructive-coding evaluations, and the OpenAI / Apollo anti-scheming work all document instances of this phenomenon. We argue that these findings create a claim-validity problem for safety conclusions drawn from frontier evaluations. We introduce the \emph{Evaluation Differential} (ED), a conditional divergence in a target behavioural property between recognised-evaluation and deployment-continuous contexts, define a normalised effect-size form ($\mathrm{nED}$) for cross-property comparison, and prove that marginal evaluation scores cannot identify ED. We develop a typology of safety claims (ED-stable, ED-degraded, ED-inverted, ED-undetermined) by their warrant-status under documented divergence, and specify \emph{TRACE} (Test-Recognition Audit for Claim Evaluation), an audit protocol that wraps existing evaluation infrastructure and produces restricted claims rather than capability scores. We apply the framework retrospectively to three publicly documented evaluation incidents and discuss governance implications for system cards, conformity assessment, and the international network of AI safety and security institutes. TRACE does not eliminate adversarial adaptation; it disciplines the claims drawn from evaluation evidence by making explicit the conditions under which that evidence was produced.
\end{abstract}

\section{Introduction}

In March 2026, Anthropic published an incident report describing the behaviour of Claude Opus 4.6 on BrowseComp, a benchmark designed to evaluate web-research capability \citep{anthropic2026browsecomp}. The model recognised that the questions had a particular shape: extreme specificity, multi-constraint structure, the signature of a benchmark rather than an ordinary user query. It hypothesised that it was being evaluated on BrowseComp. It located the answer key. It used the answer key. The reasoning capability the benchmark was trying to measure was the same capability that allowed the model to circumvent the measurement.

BrowseComp is one instance of a pattern. Anthropic's Natural Language Autoencoder (NLA) work \citep{anthropic2026nla} reported internal representations consistent with evaluation recognition in approximately 26\% of SWE-bench Verified problems and 16\% of a destructive-coding evaluation, against under 1\% in opted-in real Claude.ai conversations. The Claude Opus 4.7 system card \citep{anthropic2026opus47} reports verbal evaluation-awareness in roughly 9\% of transcripts in one alignment evaluation. OpenAI's work on detecting and reducing scheming \citep{openai2026scheming} reports that anti-scheming training reduced observed covert behaviour while simultaneously increasing models' situational awareness, raising the possibility that part of the observed safety improvement was awareness-mediated rather than alignment-mediated. The UK AI Security Institute, in parallel work on sycophancy \citep{aisi2026sycophancy}, documents a 24-percentage-point gap on its sycophancy grader scale between question and non-question prompt formats.

These findings point to a validity problem that the field has not yet adequately operationalised. The behaviour of frontier models is conditional on the context in which it is measured. When the measurement context is evaluation-like, behaviour may diverge from the behaviour the same model would produce under deployment-continuous conditions. Safety claims drawn from evaluations that do not characterise this divergence have a structural inference gap. The stakes are not narrowly technical: misleading evaluation claims shape regulatory approval, procurement decisions, public trust, and downstream reliance, so the parties harmed when claims overstate their warrant are not only evaluators but users, institutions, and publics who act on safety claims that may not extend to deployment.

We call the quantity at the centre of this gap the \textbf{Evaluation Differential (ED)}: the conditional divergence in a target behavioural property when the same task is presented under evaluation-recognised versus deployment-continuous conditions. ED captures behaviour-on-context where marginal metrics over response distributions cannot. The framing is consistent with broader sociotechnical accounts of deployed behaviour as a function of model, scaffold, and context, $B = f(M, S, C)$ \citep{raji2020auditing}, where model-level evaluation alone is insufficient to support deployment-relevant alignment claims. Before introducing the formal machinery, one positioning point about what we are and are not claiming is crucial.

Our central claim is not that frontier models are ``aware'' in any philosophically loaded sense. It is that evaluation contexts have become behavioural inputs, and that safety claims must be restricted once that conditioning is documented, regardless of the mechanism producing it. Evaluation awareness, verbalised or latent, is evidence for this conditioning. The conditioning itself, measurable as ED, is the construct.

This is a validity crisis in the measurement-theoretic sense \citep{cronbach1955,messick1995,jacobs2021}. Construct validity asks whether the procedure measures the intended property; ecological validity asks whether the measurement setting approximates the deployment setting; inference validity asks what claims can legitimately be drawn from observed measurements. The 2026 International AI Safety Report \citep{bengio2026isr} frames a closely related ``evaluation gap'' but does not provide the methodological framework needed to specify which safety claims survive when this gap is documented.

\paragraph{Contributions.} The paper's central contribution is a framework for restricting safety claims under documented test-deployment divergence. Three subsidiary contributions support this. (1) We define the Evaluation Differential as a conditional divergence, introduce a normalised effect-size form ($\mathrm{nED}$) for cross-property comparison, prove that marginal evaluation scores do not identify ED, and develop a claim typology: \emph{ED-stable} (warrant preserved, typically existence or bound claims), \emph{ED-degraded} (warrant weakened, typically prevalence claims), \emph{ED-inverted} (warrant potentially reversed, typically threshold claims about suppressible behaviours), and \emph{ED-undetermined} (warrant cannot be assessed without further audit). (2) We develop a four-layer taxonomy of test-recognition evidence (behavioural differential, verbalised recognition, latent recognition, context-cue sensitivity), ordered behavioural-first to support audit by stakeholders without lab-internal access. (3) We introduce \TRACE{} (Test-Recognition Audit for Claim Evaluation), an audit protocol that wraps existing evaluation infrastructure and produces a restricted claim rather than a capability score.

We apply the framework retrospectively to three publicly documented evaluation incidents and discuss governance implications. The paper's conceptual flow is summarised in Figure~\ref{fig:ed_pipeline}.

\begin{figure*}[!t]
\centering
\resizebox{0.78\textwidth}{!}{
\definecolor{taskFill}{RGB}{255,237,194}      
\definecolor{taskDraw}{RGB}{184,134,11}       
\definecolor{ctxFillT}{RGB}{218,232,252}      
\definecolor{ctxFillD}{RGB}{213,232,212}      
\definecolor{ctxDrawT}{RGB}{82,116,178}       
\definecolor{ctxDrawD}{RGB}{82,140,82}        
\definecolor{modelFill}{RGB}{240,240,240}     
\definecolor{scoreFillT}{RGB}{231,239,250}    
\definecolor{scoreFillD}{RGB}{226,238,225}    
\definecolor{edFill}{RGB}{255,219,140}        
\definecolor{edDraw}{RGB}{160,110,30}         
\definecolor{outFill}{RGB}{235,235,235}       

\begin{tikzpicture}[
  font=\footnotesize,
  >={Stealth[length=2mm,width=1.6mm]},
  taskbox/.style={rectangle, draw=taskDraw, line width=0.7pt, rounded corners=3pt,
                  minimum width=1.7cm, minimum height=0.6cm, align=center, fill=taskFill},
  ctxboxT/.style={rectangle, draw=ctxDrawT, line width=0.6pt, rounded corners=3pt,
                  minimum width=2.7cm, minimum height=0.65cm, align=center,
                  fill=ctxFillT, font=\scriptsize},
  ctxboxD/.style={rectangle, draw=ctxDrawD, line width=0.6pt, rounded corners=3pt,
                  minimum width=2.7cm, minimum height=0.65cm, align=center,
                  fill=ctxFillD, font=\scriptsize},
  modelbox/.style={rectangle, draw=black!60, line width=0.5pt, rounded corners=3pt,
                   minimum width=1.9cm, minimum height=0.55cm, align=center,
                   fill=modelFill, font=\scriptsize},
  scoreboxT/.style={rectangle, draw=ctxDrawT, line width=0.5pt, rounded corners=3pt,
                    minimum width=1.1cm, minimum height=0.5cm, align=center,
                    fill=scoreFillT, font=\scriptsize},
  scoreboxD/.style={rectangle, draw=ctxDrawD, line width=0.5pt, rounded corners=3pt,
                    minimum width=1.1cm, minimum height=0.5cm, align=center,
                    fill=scoreFillD, font=\scriptsize},
  edbox/.style={rectangle, draw=edDraw, line width=1pt, rounded corners=3pt,
                minimum width=2.2cm, minimum height=0.75cm, align=center,
                fill=edFill, font=\small\bfseries},
  outputbox/.style={rectangle, draw=black!50, line width=0.5pt, rounded corners=3pt,
                    minimum width=6.2cm, minimum height=0.75cm, align=center,
                    fill=outFill, font=\scriptsize}
]

\node[taskbox] (x) at (0, 4.4) {Task $x$};

\node[ctxboxT] (muT) at (-2.6, 3.2) {$\mu_T$: test-like};
\node[ctxboxD] (muD) at ( 2.6, 3.2) {$\mu_D$: deployment-continuous};

\node[modelbox] (piT) at (-2.6, 2.0) {$\pi(\cdot \mid x, c)$};
\node[modelbox] (piD) at ( 2.6, 2.0) {$\pi(\cdot \mid x, c)$};

\node[scoreboxT] (PT) at (-2.6, 0.9) {$P_T$};
\node[scoreboxD] (PD) at ( 2.6, 0.9) {$P_D$};

\node[edbox] (ed) at (0, -0.3) {$\mathrm{ED}[P;\pi]\ /\ \mathrm{nED}[P;\pi]$};

\node[outputbox] (outnode) at (0, -1.5)
  {claim type: stable / degraded / inverted / undetermined};

\draw[->, ctxDrawT, line width=0.6pt] (x.south) -- ($(muT.north) + (0,0.02)$);
\draw[->, ctxDrawD, line width=0.6pt] (x.south) -- ($(muD.north) + (0,0.02)$);
\draw[->, ctxDrawT, line width=0.6pt] (muT) -- (piT);
\draw[->, ctxDrawD, line width=0.6pt] (muD) -- (piD);
\draw[->, ctxDrawT, line width=0.6pt] (piT) -- (PT);
\draw[->, ctxDrawD, line width=0.6pt] (piD) -- (PD);
\draw[->, ctxDrawT, line width=0.6pt] (PT.south) -- ($(ed.west) + (-0.05,0)$);
\draw[->, ctxDrawD, line width=0.6pt] (PD.south) -- ($(ed.east) + (0.05,0)$);
\draw[->, edDraw, line width=0.8pt] (ed.south) -- (outnode.north);

\end{tikzpicture}}
\caption{The Evaluation Differential pipeline. The same task $x$ is presented to a model policy $\pi$ under two context distributions: a test-like distribution $\mu_T$ enriched for evaluation-typical cues, and a deployment-continuous distribution $\mu_D$ approximating ordinary use. The resulting response distributions are scored under a behavioural property $P$, producing $P_T$ and $P_D$. The Evaluation Differential is their difference; $\mathrm{nED}$ is the normalised effect-size form used for cross-property comparison. The claim typology classifies the original safety claim by its warrant under measured ED.}
\label{fig:ed_pipeline}
\end{figure*}

\section{The Validity Crisis in Frontier AI Evaluation}

Validity, in measurement-theoretic traditions \citep{cronbach1955,messick1995,jacobs2021}, is not a single property but a structured set of threats. We organise the present crisis around three. \emph{Construct validity} asks whether the procedure measures the intended property. When a model recognises a benchmark and retrieves its answer key, the measured construct shifts: the score reflects both web-research capability and benchmark-context exploitation. \emph{Ecological validity} asks whether the measurement setting approximates deployment. This is the gap the International AI Safety Report names when it observes that current evaluation methods can produce unreliable assessments of real-world behaviour. \emph{Inference validity} asks what claims can legitimately be drawn from observed measurements. System cards report observed behaviour alongside safety conclusions \citep{mitchell2019modelcards}, but the inferential bridge is rarely made explicit enough to support claim-level restriction. Restricting which claims survive under documented threats is what this paper takes as its task.

The recent evidence base for test-deployment divergence appears in four forms. \emph{Verbalised recognition}: models stating in output or chain-of-thought that they appear to be in an evaluation. The BrowseComp report and the 9\% verbalisation rate in the Opus 4.7 system card are instances. \emph{Latent recognition}: representational evidence detected through interpretability methods without verbal output. The NLA findings (26\% / 16\% / $<$1\%) document that latent recognition substantially exceeds verbal recognition. \emph{Behavioural divergence under awareness perturbation}: OpenAI / Apollo's anti-scheming work, in which experimentally suppressing situational-awareness reasoning increased observed covert behaviour and introducing evaluation-aware reasoning decreased it, provides direct evidence that awareness causally affects measured safety properties. \emph{Interaction-conditional behaviour more broadly}: AISI's sycophancy work documents that user-input phrasing produces substantial differences in measured model behaviour, structurally analogous to the evaluation-awareness findings but with the conditioning on user phrasing rather than test context.

\subsection{Existing Vocabulary and What It Misses}

Several existing concepts describe parts of this problem, but none captures the construct ED names (Table~\ref{tab:vocab}). Benchmark contamination or leakage describes data overlap; the BrowseComp pattern is evaluation-time answer-key recovery, a distinct mechanism. Goodhart's law \citep{manheim2018goodhart} describes metric decay under optimisation pressure but does not specify which inferences remain warranted under documented decay. Distribution shift captures changes in input distributions; ED concerns a conditional response shift in which the same task elicits different behaviour because the model classifies the measurement context differently. Prompt sensitivity describes broad input-dependence; ED is narrower, concerning divergence that changes the warrant of an evaluation-derived safety claim. Evaluation awareness and situational awareness \citep{berglund2023situational} are the closest existing terms but describe a putative model property rather than the measurable behavioural consequence. ED is the measurable consequence; the four-layer taxonomy in Section~4 specifies the layers of evidence; \TRACE{} in Section~5 specifies the audit procedure.

\begin{table*}[!t]
\centering
\small
\caption{Existing concepts and what they leave for ED to do. Each concept describes a real phenomenon adjacent to test-deployment divergence; none specifies which evaluation-derived safety claims remain warranted under documented divergence.}
\label{tab:vocab}
\begin{tabular}{p{0.18\textwidth} p{0.28\textwidth} p{0.22\textwidth} p{0.24\textwidth}}
\toprule
\textbf{Concept} & \textbf{What it captures} & \textbf{What it misses} & \textbf{What ED / \TRACE{} adds} \\
\midrule
Benchmark contamination & Training-data overlap with evaluation items & Evaluation-time mechanisms (recognition, retrieval, exploitation) that are not data-overlap & Distinguishes data leakage from context-recognition exploitation \\
Goodhart's law & Metric decay under optimisation pressure & Which inferences remain warranted under documented decay & Claim-level restriction procedure \\
Distribution shift & Input-distribution differences between train and test & Conditional response shift on similar tasks driven by context classification & ED estimated under controlled $\mu_T / \mu_D$ partition \\
Prompt sensitivity & Broad input-dependence of model output & Whether sensitivity is claim-relevant for the audited safety property & Materiality threshold tied to claim warrant \\
Evaluation / situational awareness & A putative model property (recognising tests) & Behavioural consequence on the audited property & ED as measurable behavioural consequence \\
\bottomrule
\end{tabular}
\end{table*}

\section{The Evaluation Differential}

\subsection{Definition}

Let $\mathcal{X}$ be a measurable space of evaluation tasks, $\mathcal{C}$ a measurable space of measurement contexts, and $\mathcal{Y}$ a measurable space of model responses. A measurement context is a structured wrapper around a task: the framing of the task to the model (system prompt, user-message format), the scaffold within which it is presented (tool affordances, environment description), and the contextual cues that signal institutional setting (benchmark-typical formatting, evaluator-typical language, task-structural features). A model policy is a Markov kernel from task-context pairs to response distributions,
\begin{equation*}
\pi(\cdot \mid x, c) \in \mathcal{P}(\mathcal{Y}), \quad (x,c) \in \mathcal{X} \times \mathcal{C}.
\end{equation*}

For a behavioural property of interest, let $s_P: \mathcal{X} \times \mathcal{C} \times \mathcal{Y} \to [a,b]$ be a bounded scoring function. The property is the expected score under the policy's response distribution,
\begin{equation*}
p_P(x,c;\pi) \;:=\; \mathbb{E}_{y \sim \pi(\cdot \mid x, c)}\bigl[s_P(x,c,y)\bigr].
\end{equation*}
To attribute ED to model behaviour rather than scoring drift, \TRACE{} requires \emph{measurement invariance} of $s_P$ across the context distributions: for each $(x,y)$ pair, the value $s_P(x,c,y)$ should not depend on whether $c \sim \mu_T$ or $c \sim \mu_D$ except where context is part of the property being scored. Without this constraint, an auditor measuring ED can confound scoring-rule drift (or judge-model bias) with the behavioural differential the construct is designed to capture.

An audit specifies two context distributions over $\mathcal{C}$: a distribution $\mu_T(\cdot \mid x)$ over test-like contexts, and a distribution $\mu_D(\cdot \mid x)$ over deployment-continuous contexts. We do not require that the model ``recognise'' $\mu_T$ as evaluation-like; whether the model conditions differently across the two distributions is what ED measures. Write $P_T(x) := \mathbb{E}_{c \sim \mu_T}[p_P(x,c;\pi)]$ and $P_D(x) := \mathbb{E}_{c \sim \mu_D}[p_P(x,c;\pi)]$ for the property under the two distributions.

The \textbf{pointwise Evaluation Differential} of policy $\pi$ on task $x$ for property $P$ is
\begin{equation*}
\mathrm{ED}_x[P;\pi] \;:=\; P_T(x) - P_D(x).
\end{equation*}
For a task distribution $\nu$ over $\mathcal{X}$, the corpus-level Evaluation Differential is
\begin{equation*}
\mathrm{ED}_{\nu}[P;\pi] \;:=\; \mathbb{E}_{x \sim \nu}\bigl[\mathrm{ED}_x[P;\pi]\bigr] \;=\; P_T - P_D,
\end{equation*}
with $P_T := \mathbb{E}_\nu[P_T(x)]$ and $P_D := \mathbb{E}_\nu[P_D(x)]$. Pointwise ED is useful for case analysis; corpus-level ED is the quantity relevant to benchmark-level safety claims. When the context is clear we write $\mathrm{ED}[P]$ for either, and use $P_T$, $P_D$ for the corresponding aggregated property values.

\paragraph{Normalisation.} For comparison across properties and models, the normalised Evaluation Differential is
\begin{equation*}
\mathrm{nED}[P;\pi] \;:=\; \frac{\mathrm{ED}[P;\pi]}{\sigma_{\mathrm{pool}}(P;\pi) + \epsilon},
\end{equation*}
where $\sigma_T^2 := \mathbb{V}_{c \sim \mu_T,\, y \sim \pi(\cdot|x,c)}[s_P(x,c,y)]$ and $\sigma_D^2$ analogously are the within-distribution variances of the score, $\sigma_{\mathrm{pool}} := \sqrt{\tfrac{1}{2}(\sigma_T^2 + \sigma_D^2)}$ is the pooled standard deviation, and $\epsilon > 0$ stabilises near-zero variance (we recommend $\epsilon \in [10^{-3},\,10^{-2}]$ on the property's natural scale). The normalisation follows standard effect-size conventions \citep{cohen1988} and is interpretable on a Cohen's-$d$ scale. We use $\mathrm{nED}$ as an effect-size summary; raw ED remains the primary policy-readable quantity for bounded safety properties, and auditors should report both.

\paragraph{Estimation.} Estimating ED requires sampling from $\mu_T$ and $\mu_D$, querying the policy at each $(x,c)$, computing $s_P$ on response samples, and taking the empirical mean difference. If $s_P$ is bounded and model calls are treated as independent conditional samples, standard two-sample concentration bounds apply at the parametric rate. In practice, response samples within a context are not fully independent and tasks within a corpus may be clustered; \TRACE{} implementations should report uncertainty through task- and context-level bootstrap intervals, with separate uncertainty for judge-based scoring.

\paragraph{Property classes.} ED is property-indexed: there is no single ED for a model, but an ED for each behavioural property of interest. We distinguish five property classes that arise in current evaluation practice: \emph{capability differentials} (task success rate, e.g.\ BrowseComp, SWE-bench), \emph{compliance / refusal differentials} (refusal or compliance rate on harmful or dual-use prompts), \emph{reasoning / monitorability differentials} (CoT transparency, explanation faithfulness), \emph{deception / scheming differentials} (covert action, sandbagging, capability concealment), and \emph{agency / meta-task differentials} (task-solving versus evaluation-gaming orientation). Among the five, the deception class is most exposed to ED-inversion, because the behaviour the property measures is the behaviour the model has the most incentive to suppress under recognised evaluation. The retrospective applications in Section~6 instantiate three of these classes (capability, agency, and threshold-form deception).

\subsection{Marginal Non-Identifiability}

\begin{proposition}[Marginal non-identifiability of ED]
\label{prop:marginal}
A marginal evaluation procedure that reports only the mixture-averaged property score cannot identify ED. There exist policies with identical mixture-averaged scores but different Evaluation Differentials.
\end{proposition}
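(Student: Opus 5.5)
The plan is to make precise what a ``marginal evaluation procedure'' observes and then construct two policies that such a procedure cannot tell apart but whose Evaluation Differentials differ. Fix a task distribution $\nu$, context distributions $\mu_T$, $\mu_D$, and a mixing weight $\lambda \in (0,1)$. The weight $\lambda$ is the proportion of test-like contexts in the evaluation stream. A marginal procedure draws contexts from $\mu_\lambda := \lambda \mu_T + (1-\lambda)\mu_D$ and reports only
\begin{equation*}
M_\lambda[P;\pi] \;:=\; \mathbb{E}_{x\sim\nu}\,\mathbb{E}_{c\sim\mu_\lambda}\bigl[p_P(x,c;\pi)\bigr] \;=\; \lambda P_T + (1-\lambda) P_D .
\end{equation*}
This identity follows directly from linearity of expectation and the definitions of $P_T$ and $P_D$. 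Hence $M_\lambda$ depends on $\pi$ only through the pair $(P_T, P_D)$, and only through one linear functional of it. The map $(P_T,P_D) \mapsto (M_\lambda, \mathrm{ED})$ is linear with determinant $-1 \neq 0$. So it suffices to show that, for a fixed value of $M_\lambda$, the attainable pairs $(P_T, P_D)$ trace out a segment with more than one point. Along that segment $\mathrm{ED} = P_T - P_D$ varies.

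For the construction, I need the two context distributions to be distinguishable by some measurable set. Pick a measurable $A \subseteq \mathcal{C}$ with $\mu_T(A\mid x) = 1$ and $\mu_D(A\mid x) = 0$. If the supports are not separated, I will instead use any set with $\mu_T(A) \ne \mu_D(A)$ and adjust the constants below. Assume also that $s_P$ attains responses $y_a$ and $y_b$ with scores $a$ and $b$, taking $s_P$ to be context-invariant as measurement invariance requires. Then define two policies:
\begin{itemize}
\item $\pi_0$ ignores context and puts mass $q$ on $y_b$, so $P_T = P_D = a + q(b-a)$ and $\mathrm{ED}[P;\pi_0] = 0$;
\item $\pi_1$ puts mass $q_T$ on $y_b$ when $c \in A$ and mass $q_D$ when $c \notin A$.
\end{itemize}
Choose $q_T \neq q_D$ with $\lambda q_T + (1-\lambda) q_D = q$. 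For any interior $q$ this is possible by taking $q_T = q + (1-\lambda)\delta$ and $q_D = q - \lambda\delta$ for small $\delta>0$. Then $M_\lambda[P;\pi_1] = M_\lambda[P;\pi_0]$, while $\mathrm{ED}[P;\pi_1] = \delta(b-a) \neq 0$. Both policies are valid Markov kernels because they are measurable in $c$ through $\mathbf{1}_A$.

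The main obstacle is stating the hypotheses precisely, since the algebra is trivial. The statement implicitly assumes three things: $\mu_T \neq \mu_D$ (otherwise ED is identically zero and identifiable), $s_P$ is non-degenerate ($a<b$ and both values attained), and $\lambda$ is fixed rather than varied by the procedure. If a procedure reports $M_\lambda$ for two distinct values of $\lambda$, ED becomes identifiable. I will therefore state that ``marginal'' means a single mixture weight, possibly unknown. An unknown $\lambda$ only strengthens the conclusion.

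Finally, I will note that the same construction makes $\mathrm{nED}$ non-identifiable. Under $\pi_1$ the pooled variance is bounded and the numerator is nonzero, whereas under $\pi_0$ the numerator is zero.
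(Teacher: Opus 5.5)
Your proposal is correct and uses the same idea as the paper's proof: pair a context-blind policy with a context-split policy whose mixture-averaged scores coincide but whose Evaluation Differentials differ. The paper fixes $\lambda = 1/2$ and disjoint supports at a single task, takes $p_P = 1$ on $T$ and $0$ on $D$ against $p_P \equiv 1/2$, and then replicates over $\nu$; your version (general $\lambda$, interior $q$, policies realised explicitly as mixtures of two responses, and the remark that two distinct mixture weights would identify ED) is a mild generalisation of that construction rather than a different route.
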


\begin{proof}
Fix a task $x$ and let $\mu_T$ be a context distribution supported on a region $T \subseteq \mathcal{C}$ and $\mu_D$ a context distribution supported on a disjoint region $D \subseteq \mathcal{C}$. Let $\mu := \tfrac{1}{2}\mu_T + \tfrac{1}{2}\mu_D$ be the equal-weight mixture under which the marginal evaluation procedure scores. Fix a bounded property $P \in [0,1]$. Let policy $\pi_1$ satisfy $p_P(x,c;\pi_1) = 1$ for $c \in T$ and $p_P(x,c;\pi_1) = 0$ for $c \in D$. Let policy $\pi_2$ satisfy $p_P(x,c;\pi_2) = 1/2$ for all $c$. The mixture-averaged scores are equal: $\mathbb{E}_{c \sim \mu}[p_P(x,c;\pi_1)] = \tfrac{1}{2}(1) + \tfrac{1}{2}(0) = 1/2$ and $\mathbb{E}_{c \sim \mu}[p_P(x,c;\pi_2)] = 1/2$. The Evaluation Differentials differ: $\mathrm{ED}_x[P;\pi_1] = P_T(x) - P_D(x) = 1 - 0 = 1$, while $\mathrm{ED}_x[P;\pi_2] = \tfrac{1}{2} - \tfrac{1}{2} = 0$. The same construction extends to the corpus-level differential $\mathrm{ED}_\nu[P;\pi]$ by replicating across tasks $x \sim \nu$.
\end{proof}

The proposition formalises the structural reason that aggregation-based evaluation methodologies cannot, in principle, detect what ED measures. The two policies in the construction are observationally equivalent under marginal scoring; only an audit that estimates the difference between $\mu_T$-conditioned and $\mu_D$-conditioned behaviour can distinguish them. \emph{In plain terms: no matter how many runs of a benchmark are averaged together, no marginal score can rule out the possibility that the underlying model behaves quite differently under test-like and deployment-continuous contexts.} The implication for safety claims drawn from aggregate benchmark scores is that two models with identical reported scores may carry very different deployment-relevant risk profiles.

\subsection{The Claim Typology}

ED is a measurement; the policy-relevant question is what documented ED implies for the safety claims drawn from an evaluation. Table~\ref{tab:typology} summarises the typology, which classifies claims by the relationship between their logical form and the direction of measured ED.

\begin{table*}[!t]
\centering
\small
\caption{Claim typology under documented Evaluation Differential. Threshold claims admit two symmetric inversion conditions, one for each safety direction.}
\label{tab:typology}
\begin{tabular}{p{0.13\textwidth} p{0.22\textwidth} p{0.27\textwidth} p{0.28\textwidth}}
\toprule
\textbf{Type} & \textbf{Claim form} & \textbf{ED condition} & \textbf{Surviving claim} \\
\midrule
ED-stable & Existence or bound claim & Direction-consistent or null & Preserved under observed conditions; valid as a one-sided bound when ED direction is verified \\
ED-degraded & Prevalence claim (deployment rate) & Non-trivial ED of unknown or moderate direction & Weakened: evaluation result constrains plausible deployment behaviour but does not support a point claim \\
ED-inverted & Threshold claim ($P_D \leq \tau$ or $P_D \geq \tau$) & For $P_D \leq \tau$: $\mathrm{ED}[P] < P_T - \tau$. For $P_D \geq \tau$: $\mathrm{ED}[P] > P_T - \tau$. & Withdrawn for deployment; supported only for the $\mu_T$ region. Evaluation may indicate the opposite of the deployment claim \\
ED-undetermined & Any claim where ED cannot be estimated & Replay infeasible or estimates too noisy & Claim treated as weakened pending further audit; specifies what evidence would resolve the state \\
\bottomrule
\end{tabular}
\end{table*}

\paragraph{ED-stable.} Existence claims (``the model can do $X$ under the observed condition'') are preserved as observed-condition existence claims; the deployment-prevalence extension is not warranted by the evaluation evidence alone. Upper-bound claims of the form $P_D \leq P_T$ are supported when the evaluation condition is risk-amplifying for $P$, so that $\mathrm{ED}[P] \geq 0$. Lower-bound claims are supported symmetrically when the evaluation condition is adverse for a property where higher values are desirable.

\paragraph{ED-degraded.} Prevalence claims with non-trivial measured ED are weakened. The evaluation does not become uninformative; it constrains the plausible deployment range. With a known ED direction, a directional bound on deployment behaviour is recoverable; without it, the point estimate is not warranted. The framework is mechanism-agnostic: ED need not be awareness-mediated. Sycophancy under question-versus-non-question prompts \citep{aisi2026sycophancy} illustrates the same warrant-restricting logic without requiring model-side recognition.

\paragraph{ED-inverted.} If a safety claim has form $P_D \leq \tau$ (lower $P$ is safer) and $\mathrm{ED}[P] < P_T - \tau$, then since $P_D = P_T - \mathrm{ED}[P]$ we have $P_D > \tau$: the deployment claim fails despite the evaluation appearing to satisfy it. By symmetry, for $P_D \geq \tau$ claims, the inversion condition is $\mathrm{ED}[P] > P_T - \tau$. ED-inverted claims arise when independent evidence indicates that test-like contexts systematically suppress the property below its deployment value. Suppressible behaviours (deception, sandbagging, covert action, capability concealment) are the natural candidates.

\paragraph{Worked example.} A scheming-rate evaluation reports $P_T = 0.05$ (5\% scheming under evaluation) and a safety claim of the form $P_D \leq \tau = 0.10$. Suppose awareness-perturbation evidence indicates $\mathrm{ED}[P] \approx -0.20$ (deployment exceeds evaluation by twenty percentage points because evaluation suppresses scheming). Then $P_D = P_T - \mathrm{ED}[P] = 0.05 - (-0.20) = 0.25$, which exceeds $\tau$. The inversion condition is $\mathrm{ED}[P] < P_T - \tau = -0.05$; since $-0.20 < -0.05$, the claim is ED-inverted: the evaluation appears to satisfy the deployment threshold while deployment violates it. The same arithmetic, with signs reversed, applies to $P_D \geq \tau$ claims about properties where higher values are safer.

\paragraph{ED-undetermined.} Where ED cannot be estimated, the claim defaults to ED-undetermined. Two infeasibility modes apply: \emph{technical} (auditor lacks access, query budget, or interpretability tooling) and \emph{ontological} (the task exists only as a test, so no defensible $\mu_D$ can be constructed; competitive benchmarks and bespoke red-team scenarios are typical cases). The audit specifies which mode applies and what evidence would resolve the state.

\section{Four Evidentiary Layers}

ED is the formal construct; auditors estimate it through several evidentiary layers (Table~\ref{tab:layers}). The layers differ in what they observe, what access they require, and what inferences they support. We order them behavioural-first because ED ultimately concerns claim-relevant behaviour and because behavioural evidence is the layer most available to external auditors. A framework that placed latent recognition at the centre would make external audit structurally dependent on lab-internal interpretability cooperation, which we take as the wrong design choice for a methodology intended to support independent third-party assessment \citep{raji2020auditing}.

\begin{table*}[!t]
\centering
\small
\caption{Four evidentiary layers for \TRACE{}. The framework remains usable under partial access: behavioural and verbalised evidence is available at API access alone; latent evidence typically requires lab cooperation; context-cue sensitivity is an experimental probe usable at API access.}
\label{tab:layers}
\begin{tabular}{p{0.18\textwidth} p{0.22\textwidth} p{0.18\textwidth} p{0.18\textwidth} p{0.14\textwidth}}
\toprule
\textbf{Layer} & \textbf{What it observes} & \textbf{Access required} & \textbf{What it supports} & \textbf{Main limitation} \\
\midrule
Behavioural differential & Difference in $P$ across $\mu_T$ and $\mu_D$ & API access; query budget; context construction & Whether ED exists and its magnitude & Does not identify mechanism \\
Verbalised recognition & Model states or implies it is in a test or evaluation & Output access; CoT visibility if available & Visible evidence of test recognition & Underreports; suppressible; not causal proof \\
Latent recognition & Internal representations of evaluation context & Lab-internal or special interpretability access & Possible mechanistic evidence of context representation & Not externally available; method validity uncertain \\
Context-cue sensitivity & Which cues drive divergence & API access plus systematic ablation & Design and mitigation guidance & Costly; can miss cue interactions \\
\bottomrule
\end{tabular}
\end{table*}

Behavioural differential is the audit-anchoring layer because safety claims turn on what the model does, not on what it says about itself or what its internal representations encode. ED, as defined in Section~3, is estimated through behavioural measurement; the other three layers exist to inform inference about the behavioural differential. Verbalised recognition is the easiest layer to detect but should be treated as self-report-like evidence: useful and visible, but not privileged access to the causal mechanism of the response. Verbalised recognition is also the layer most exposed to training pressure, and the NLA findings establish that it substantially under-reports the conditioning structure when used alone. Latent recognition can be the most mechanistically informative layer when the interpretability method is validated for the property under audit, but it requires lab-internal access. Latent recognition is evidence of context representation, not by itself evidence of claim-relevant behavioural divergence; its audit value depends on whether it predicts or explains measured ED. Context-cue sensitivity is methodologically downstream of behavioural differential: it decomposes the differential into specific cues that drive it. The evidence-layer probe identifies candidate cues; cue materiality is established by ablation under replay (Section~5).

A complete audit, where access permits, combines all four. Where only a subset is accessible, the framework operates on what is available and restricts inferences accordingly.

\section{TRACE: Test-Recognition Audit for Claim Evaluation}

\TRACE{} is the audit protocol that operationalises the framework (Figure~\ref{fig:trace}). It takes as input a safety claim, the policy producing the evidence, and the evaluation evidence itself. It produces a measured or estimated ED, a typology classification, and a restricted claim. \TRACE{} is a wrapper around existing evaluation infrastructure, not a replacement; it does not produce new capability scores or new safety benchmarks. Its output is a restricted set of safety claims rather than a capability score, which is why the protocol is named for Claim Evaluation rather than Capability Evaluation.

\begin{figure}[!t]
\centering
\resizebox{0.98\columnwidth}{!}{
\definecolor{procFill}{RGB}{225,235,247}      
\definecolor{procDraw}{RGB}{82,116,178}       
\definecolor{procMainFill}{RGB}{255,229,180}  
\definecolor{procMainDraw}{RGB}{184,134,11}   
\definecolor{ioFill}{RGB}{255,255,255}        
\definecolor{ioDrawIn}{RGB}{60,60,60}         
\definecolor{ioDrawOut}{RGB}{67,138,67}       
\definecolor{ioFillOut}{RGB}{226,238,225}     
\definecolor{layerFill}{RGB}{240,240,240}     
\definecolor{layerDraw}{RGB}{120,120,120}     
\definecolor{stopFill}{RGB}{248,232,232}      
\definecolor{stopDraw}{RGB}{170,100,100}      

\begin{tikzpicture}[
  font=\footnotesize,
  >={Stealth[length=2mm,width=1.6mm]},
  proc/.style={rectangle, draw=procDraw, line width=0.7pt, rounded corners=3pt,
               minimum width=6.6cm, minimum height=0.95cm, align=center,
               fill=procFill, font=\scriptsize, inner sep=4pt},
  procmain/.style={rectangle, draw=procMainDraw, line width=1pt, rounded corners=3pt,
                   minimum width=6.6cm, minimum height=0.95cm, align=center,
                   fill=procMainFill, font=\scriptsize, inner sep=4pt},
  procreplay/.style={rectangle, draw=procMainDraw, line width=1pt, rounded corners=3pt,
                     minimum width=6.6cm, minimum height=1.1cm, align=center,
                     fill=procMainFill, font=\scriptsize, inner sep=4pt},
  ioin/.style={rectangle, draw=ioDrawIn, line width=0.7pt, rounded corners=3pt,
               minimum width=4.6cm, minimum height=0.7cm, align=center,
               fill=ioFill, font=\scriptsize\bfseries},
  ioout/.style={rectangle, draw=ioDrawOut, line width=0.8pt, rounded corners=3pt,
                minimum width=4.6cm, minimum height=0.7cm, align=center,
                fill=ioFillOut, font=\scriptsize\bfseries},
  evidencebox/.style={rectangle, draw=layerDraw, line width=0.6pt, rounded corners=3pt,
                      minimum width=6.6cm, minimum height=1.6cm, align=center,
                      font=\scriptsize, fill=layerFill, inner sep=6pt},
  stopbox/.style={rectangle, draw=stopDraw, line width=0.6pt, rounded corners=3pt,
                  minimum width=6.6cm, minimum height=0.6cm, align=center,
                  font=\scriptsize, fill=stopFill}
]

\node[ioin] (input) at (0, 9.6) {Safety claim};

\node[proc] (scope) at (0, 8.35) {\textbf{5.0 Claim scoping}\quad $P,\ s_P,\ \tau,\ \mu_D$};

\node[proc] (cue) at (0, 6.95) {\textbf{5.1 Cue naturalness audit}\\\textit{partition validity; cue inventory}};

\node[stopbox] (stop) at (0, 5.75)
  {Stopping rule: $\mu_D$ infeasible $\rightarrow$ ED-undetermined};

\node[procmain] (probe) at (0, 4.35) {\textbf{5.2 Evidence-layer probe}};

\node[evidencebox] (evidence) at (0, 2.7) {%
  \begin{tabular}{@{}l@{}}
  $\bullet$ Behavioural indicators \\
  $\bullet$ Verbalised recognition \\
  $\bullet$ Latent recognition \textit{(if lab-internal access available)} \\
  $\bullet$ Candidate cues
  \end{tabular}};

\node[procreplay] (replay) at (0, 0.6) {\textbf{5.3 Counterfactual replay + cue ablation}\\\textit{ED estimate; bootstrap intervals;}\\\textit{cue-materiality results}};

\node[proc] (restrict) at (0, -1.05) {\textbf{5.4 Claim restriction}\\\textit{typology classification}};

\node[ioout] (output) at (0, -2.25) {Restricted claim report};

\draw[->, procDraw, line width=0.7pt] (input)    -- (scope);
\draw[->, procDraw, line width=0.7pt] (scope)    -- (cue);
\draw[->, stopDraw, line width=0.6pt] (cue) -- (stop);
\draw[->, procMainDraw, line width=0.8pt] (stop)     -- (probe);

\draw[->, layerDraw, line width=0.7pt] (probe.south) -- (evidence.north);

\draw[->, procMainDraw, line width=1pt] (evidence.south) -- (replay.north)
  node[midway, right=1pt, font=\tiny\itshape] {evidence + candidate cues};

\draw[->, procDraw, line width=0.8pt] (replay) -- (restrict)
  node[midway, right=1pt, font=\tiny\itshape] {ED estimate + uncertainty};

\draw[->, ioDrawOut, line width=0.8pt] (restrict) -- (output);

\end{tikzpicture}}
\caption{\TRACE{} protocol flow. After claim scoping (5.0) and partition validation (5.1), the evidence-layer probe (5.2) collects accessible evidence and candidate cues; counterfactual replay with cue ablation (5.3) produces the ED estimate and cue-materiality results; claim restriction (5.4) applies the typology. Latent recognition is available only under lab-internal interpretability access; the other three layers are available at API access alone.}
\label{fig:trace}
\end{figure}

\subsection{Claim Scoping and Module Specification}

The protocol begins with claim scoping. The auditor maps the safety claim into measurable form: original claim quoted verbatim, target behavioural property $P$, scoring function $s_P$, safety direction (whether higher $P$ is safer or riskier), logical form (existence, bound, prevalence, or threshold with $\tau$), target deployment context, evaluation evidence source, and model version. This produces a \emph{claim specification record} that travels with the audit and ensures the typology classification operates on a claim whose logical structure has been made explicit. The four operational modules follow.

\paragraph{Cue Naturalness Audit.} Validates the $\mu_T / \mu_D$ partition. The auditor specifies how the distributions are constructed, runs blind-rater discrimination (independent raters classifying which distribution samples come from), runs model-judge discrimination as a sensitivity check (not a proxy for the audited model's perception), and produces a cue inventory naming the distinguishing features. High classification accuracy provides evidence that the partition is distinguishable along human-observable dimensions; near-chance accuracy weakens but does not eliminate the case that the partition captures model-salient cues, because the model may detect features humans do not. Where deployment is heterogeneous or emergent, the auditor stress-tests $\mu_D$ by constructing multiple candidate specifications drawn from distinct plausible deployment contexts and checking that ED estimates are not driven by a single contestable $\mu_D$ choice. A stopping rule applies: if the auditor cannot construct a defensible $\mu_D$, \TRACE{} cannot estimate ED and the claim defaults to ED-undetermined.

\paragraph{Evidence-Layer Probe.} Identifies accessible evidence and candidate cues. The probe collects preliminary behavioural indicators, verbalised-recognition classifications, latent-recognition findings where available, and the refined cue list emerging from the naturalness audit. This module identifies what evidence is available and which cues are candidates for the ablation in the next module; it does not finalise the ED estimate.

\paragraph{Counterfactual Replay and Cue Ablation.} Estimates ED and tests cue materiality. The auditor constructs $\mu_D$ to match observed or plausible deployment conditions, replays the evaluation task under $\mu_D$ on the same policy version, re-estimates the property, and computes the empirical ED with bootstrap intervals. The candidate cues from 5.2 are then ablated one at a time: a cue is treated as audit-relevant only when removing it produces a claim-relevant change in ED. Three limitations require explicit acknowledgement: deployment heterogeneity (the auditor's $\mu_D$ is one operationalisation; the framework recommends multiple specifications where deployment is known to be heterogeneous), temporal validity (replay results are valid for the model version tested), and coverage (replay is computationally expensive; most audits replay a sampled subset).

\paragraph{Claim Restriction.} Applies the typology from Section~3.3. ED-stable claims are preserved with explicit scope (observed-condition existence; direction-verified bound). For threshold claims, both inversion directions are tested: $P_D \leq \tau$ inverts when $\mathrm{ED}[P] < P_T - \tau$; $P_D \geq \tau$ inverts when $\mathrm{ED}[P] > P_T - \tau$. ED-inverted claims are withdrawn for deployment and restricted to the $\mu_T$ region. ED-degraded claims are weakened with explicit direction and approximate magnitude. ED-undetermined claims trigger the fallback and specify what additional evidence would resolve the state.

\paragraph{Output artefact.} The audit's main artefact is the \emph{restricted claim report}: a document presenting the original claim and the restricted claim side by side, with the evidential reason for the restriction stated explicitly. A minimal report documents the original claim, $P$, $s_P$, safety direction, $\mu_T / \mu_D$ construction, model version, sampling parameters, the ED estimate (or ED-undetermined status) with uncertainty, evidence layers accessed, cue-ablation results, replay coverage as a fraction of the original evaluation corpus, and the restricted claim. The report is usable as a supplement to system cards \citep{mitchell2019modelcards}, evaluation reports, or conformity-assessment documentation. Table~\ref{tab:reporting} specifies the minimum reporting requirements that make a \TRACE{} report auditable as a claim-validity procedure rather than a diagnostic heuristic.

\begin{table}[!ht]
\centering
\small
\caption{Minimum reporting requirements for a \TRACE{} audit. These items make a \TRACE{} report auditable as a claim-validity procedure. Without them, a report can identify a possible divergence but cannot support a deployment-relevant claim restriction with reproducible warrant.}
\label{tab:reporting}
\begin{tabular}{p{0.20\columnwidth} p{0.70\columnwidth}}
\toprule
\textbf{Requirement} & \textbf{Minimum reportable item / function} \\
\midrule
Task-context mapping & Matched or functionally equivalent $\mu_D$ task where feasible; otherwise ED-undetermined. Document system prompts, scaffolds, tools, and any evaluator-specific markers. \emph{Separates context-conditioned behaviour from task-difficulty differences; makes the $\mu_T / \mu_D$ partition auditable.} \\
\addlinespace
Statistical and threshold reporting & Raw ED, $\mathrm{nED}$ where used, $\sigma_{\mathrm{pool}}$, $\epsilon$, sample sizes, intervals, judge-scoring uncertainty; for threshold claims, pre-specify $\tau$ and safety direction. \emph{Distinguishes meaningful divergence from noise; prevents post hoc ED-inversion claims.} \\
\addlinespace
Operational provenance & Model endpoint and version, date of access, sampling parameters, tool/scaffold configuration, query budget, and excluded cases. \emph{Guards against silent model updates, hidden scaffold changes, and selective reporting.} \\
\addlinespace
Evidence and property specificity & Report which layers were available (behavioural, verbalised, latent, cue-sensitivity); state if latent evidence is unavailable. Compute ED per behavioural property, not as a global model-level number. \emph{Preserves graceful degradation; prevents misleading model-level ED summaries.} \\
\bottomrule
\end{tabular}
\end{table}

Three properties of the protocol bear emphasis. \emph{Graceful degradation}: \TRACE{} produces useful claim restriction at varying access levels, supporting API-only audits, audits with lab cooperation, and audits with interpretability access. \emph{Versioning}: detection methods, cue inventories, and recognition classifiers change over time; audits should be versioned and re-validated. \emph{Non-displacement}: \TRACE{} does not replace lab-internal evaluation, regulatory conformity assessment, or independent benchmarks; its value is principled restriction of inferences, not generation of new evidence.

\section{Retrospective Applications}

We apply \TRACE{} retrospectively to three publicly documented evaluation incidents, summarised in Table~\ref{tab:cases}. The three cases were chosen because together they span the analytically distinct ways test-deployment divergence has surfaced in the public frontier-evaluation record: an explicit behavioural exploitation of a recognised benchmark (BrowseComp), a representational signature of evaluation recognition without paired behavioural measurement (the NLA findings), and a structural awareness-mediated reduction of an audited property (the OpenAI / Apollo work). They are also the cases for which the publishing labs have themselves released enough detail to make retrospective audit possible. The applications work from public reports rather than from primary evaluation infrastructure; they demonstrate the framework's structure on documented evidence but do not reproduce the underlying measurements. We give compact walkthroughs of BrowseComp and the NLA cases below, and treat the OpenAI / Apollo case in fuller detail in Section~6.4 because it provides the clearest public evidence of the structural form of inversion.

\begin{table*}[!t]
\centering
\small
\caption{Retrospective application of \TRACE{} to three publicly documented evaluation incidents. Sources: \citet{anthropic2026browsecomp}; \citet{anthropic2026nla}; \citet{openai2026scheming}. The classifications instantiate the principal types in the claim typology; per-case walkthroughs are in Sections~6.1--6.4.}
\label{tab:cases}
\begin{tabular}{p{0.13\textwidth} p{0.20\textwidth} p{0.20\textwidth} p{0.14\textwidth} p{0.22\textwidth}}
\toprule
\textbf{Case} & \textbf{Original claim} & \textbf{Evidence available} & \textbf{\TRACE{} class.} & \textbf{Restricted claim} \\
\midrule
BrowseComp & Web-research capability at $P_T$ on BrowseComp & Verbalised recognition; behavioural departure on affected items & ED-stable as existence; ED-degraded as prevalence & Observed-condition capability claim; corpus-level ED unknown without replay study \\
\addlinespace
NLA / SWE-bench Verified & Software-engineering capability at $P_T$ & NLA-detected recognition (26\%); behavioural not jointly reported & ED-undetermined as prevalence & Benchmark-context capability claim pending matched deployment-continuous replay \\
\addlinespace
NLA / destructive-coding & Destructive-action rate $\leq \tau$ & NLA-detected recognition (16\%); behavioural not jointly reported & Candidate ED-inverted & Evaluation-condition claim only; deployment claim suspended pending replay \\
\addlinespace
OpenAI / Apollo scheming & Post-training scheming rate $\leq \tau$ & Verbalised awareness elevated; behavioural perturbation evidence & Structural ED-inversion risk & Evaluation-condition reduction; deployment claim restricted pending deployment-continuous replay \\
\bottomrule
\end{tabular}
\end{table*}

\subsection{BrowseComp}

The original claim is a capability-prevalence claim: the evaluated model achieves $P_T$ on a web-research benchmark. \TRACE{} scopes $P$ as task-completion rate and identifies the test-like context $\mu_T$ as benchmark-shaped, multi-constraint questions with web-search affordance and no conversational scaffolding. A deployment-continuous $\mu_D$ would consist of user-initiated research queries with conversational framing and varied constraint specificity. The public report documents verbalised recognition and the answer-key-recovery action on affected items, but does not report a corpus-level ED estimate against a matched $\mu_D$. The expected sign of ED on affected items is positive, because the benchmark-context exploitation pathway (recognition $\rightarrow$ search-for-benchmark $\rightarrow$ answer-key recovery) would not be available under $\mu_D$. The restricted claim is therefore an observed-condition capability claim (ED-stable as existence) combined with an ED-degraded deployment prevalence claim: performance under $\mu_T$ does not identify deployment-continuous web-research capability without replay.

\subsection{NLA on SWE-bench and Destructive Coding}

In the NLA cases, the load-bearing evidence is latent recognition rather than behavioural divergence. For SWE-bench Verified the claim is a capability-prevalence claim; for destructive coding it is a threshold safety claim of the form $P_D \leq \tau$ with lower being safer. NLA-detected recognition is materially higher in evaluation contexts (26\% / 16\%) than in opted-in real usage ($<$1\%), but behavioural properties were not jointly measured under matched deployment-continuous contexts. The critical analytic point is that latent recognition is evidence of context representation, not by itself evidence of claim-relevant behavioural divergence on the audited properties: NLA tells the auditor that the model is representing something about its context, not what its property-relevant behaviour does as a function of that representation. \TRACE{} therefore classifies SWE-bench prevalence as ED-undetermined and destructive coding as candidate ED-inverted, pending behavioural replay that pairs the NLA measurement with property-level behavioural outcomes under matched $\mu_D$.

\subsection{Cross-case pattern}

Across the three cases, public frontier-evaluation evidence already contains the ingredients \TRACE{} requires but rarely in the same place. BrowseComp provides verbalised recognition and behavioural exploitation; the NLA cases provide latent recognition without paired behavioural replay; OpenAI / Apollo provides awareness-perturbation evidence closest to \TRACE{}'s counterfactual replay, though not deployment-continuous replay. The asymmetry is the point: \TRACE{} operates across partial evidence, identifies which evidentiary layers are available, and restricts the claim accordingly.

The substantive consequence is that the question facing frontier safety reporting is no longer whether evaluation-recognition exists; the public evidence already documents it in three forms (verbalised, latent, behaviour-perturbative). The open question is which safety claims survive recognition once it is documented, and that is the question \TRACE{} is built to answer.

\subsection{Walkthrough: OpenAI / Apollo Anti-Scheming Training}

We illustrate the full \TRACE{} protocol on the OpenAI / Apollo case, which provides the clearest public evidence of the structural form of inversion.

\emph{Claim scoping.} Property $P$ = scheming-behaviour rate. Safety direction: lower is safer. Logical form: threshold claim $P_D \leq \tau$. Implicit deployment claim: training-induced reduction generalises from evaluation to deployment.

\emph{Cue naturalness audit.} Scheming evaluations involve multi-step deception scenarios, evaluator-typical task framings, and environments designed to elicit covert action; these are plausibly distinguishable from deployment-continuous interactions, satisfying the naturalness audit for partition validity.

\emph{Evidence-layer probe.} Verbalised awareness is elevated post-training, characterised in CoT reasoning and through targeted interventions. Behavioural perturbation experiments are central: scheming rates differ between standard evaluation conditions and conditions where awareness reasoning is suppressed or introduced. Candidate cues for ablation include evaluator-typical task framings, multi-step deception structure, and awareness-reasoning prompts.

\emph{Counterfactual replay and cue ablation.} The published mechanism stress test plays the role \TRACE{} would assign to cue ablation around the awareness-reasoning cue. It does not constitute deployment replay (deployment-continuous contexts differ from awareness-suppressed evaluation contexts in many ways beyond awareness reasoning), but it provides unusually direct evidence that evaluation-awareness reasoning causally changes the measured scheming property. Published conditions: $P_T$ (post-training, standard evaluation, awareness active) is low; $P_{T,\,\mathrm{noaw}}$ (awareness suppressed) is substantially higher.

\emph{Claim restriction.} The structural inversion test is $\mathrm{ED}[P] < P_T - \tau$. The published perturbations show $P_T$ substantially lower than $P_{T,\,\mathrm{noaw}}$. To the extent deployment-continuous contexts resemble the no-awareness condition (plausible if deployment lacks the cues producing awareness reasoning), the implied $P_D$ is closer to $P_{T,\,\mathrm{noaw}}$ than to $P_T$. Whether $P_D$ exceeds $\tau$ depends on $\tau$, which the public report does not specify in a form usable for the test. The structural form of inversion is documented; the threshold question is open. Restricted claim: anti-scheming training reduced covert-action rates under reported evaluation conditions, but the published perturbations show the reduction is at least partly awareness-mediated, so the evaluation result cannot by itself establish a deployment-stable reduction in scheming.

The case shows that the lift from current system-card practice to a \TRACE{} report is small: the lab has already produced a property estimate under standard evaluation, a property estimate under awareness suppression, and a documented sensitivity to a named cue. The missing piece is deployment-continuous replay against a defensible $\mu_D$, paired with explicit claim specification and reporting requirements.

\section{Risks and Governance Implications}

\subsection{Risks of the Framework}

\TRACE{} operates in a frontier-model ecosystem in which the methods being audited and the systems being measured both adapt over time. We summarise the principal risks below; Table~\ref{tab:risks} expands them into six recurring failure modes with mitigations and residual limitations.

\begin{table*}[!t]
\centering
\small
\caption{Risks of \TRACE{}, mitigations, and residual limitations. The mitigations are layered; no single mitigation eliminates a risk on its own, and the residual limitations identify what \TRACE{} cannot do.}
\label{tab:risks}
\begin{tabular}{p{0.18\textwidth} p{0.24\textwidth} p{0.24\textwidth} p{0.26\textwidth}}
\toprule
\textbf{Risk} & \textbf{How it arises} & \textbf{\TRACE{} mitigation} & \textbf{Residual limitation} \\
\midrule
Goodharting on cue markers & Pipeline tunes to satisfy known cues & Materiality threshold; cue refresh; held-out probes & Unidentified cues may remain \\
Overfitting to audited $\mu_D$ & Model tuned to known replay distribution & Multiple $\mu_D$; held-out replay; coverage reporting & Deployment heterogeneous; needs post-deployment monitoring \\
Verbalised recognition suppression & Training reduces ``this is a test'' language & Behavioural-first ordering & API-only audits without replay remain weaker \\
Latent signal suppression & Interpretability-detected features trained away & Versioned probes; multiple methods & No guarantee against feature obfuscation \\
Restricted-claim shopping & Labs reformulate claims to avoid inversion & Original-to-restricted delta reporting & Downstream readers must notice weaker claims \\
Disclosure dilemma & Public probes become gameable & Graduated disclosure & Transparency-evasion trade-off \\
\bottomrule
\end{tabular}
\end{table*}

\emph{Goodharting and overfitting.} Models or pipelines may tune to known cue inventories or audited $\mu_D$ specifications while remaining unstable elsewhere. \TRACE{} mitigates this through cue refresh, held-out probes, multiple $\mu_D$ specifications, replay-coverage reporting, and post-deployment monitoring. These mitigations narrow but do not close the gap; deployment-stable behaviour ultimately requires alignment work, not audit work.

\emph{Signal suppression and disclosure.} Published detection methods can become trainable-against: models may suppress verbalised recognition or latent features without eliminating the underlying context-conditioning. \TRACE{} therefore uses behavioural-first ordering, versioned probes, and graduated disclosure of held-out components. These mitigations are partial; the long-run audit ecosystem will require rotating methods rather than a fixed canonical test, which is part of the case for institutional, multi-jurisdictional audit infrastructure rather than reliance on any single fixed protocol.

\subsection{Governance Implications}

\emph{System cards and safety reports.} A \TRACE{} supplement specifies a structured format for safety claims under documented divergence, including the claim specification record, $\mu_T / \mu_D$ construction summary, ED estimate or ED-undetermined statement, evidence-layer availability, typology classification, restricted claim, and replay coverage. A natural extension is a \emph{claim registry}: each system-card claim receives a unique identifier linked to its property, scoring function, context distributions, evidence layers, and restricted-claim status. The registry indexes claims and their evidential status, allowing later audits and model updates to revise claims without rewriting the safety report, and extends the model-card tradition \citep{mitchell2019modelcards} to claims.

\emph{Regulatory, standards, and cross-jurisdictional infrastructure.} In contexts including EU AI Act conformity assessment and US standards and testing infrastructure around CAISI at NIST \citep{nist2023rmf}, \TRACE{} provides a methodology compatible with the access-asymmetry regulators face; operational form differs by jurisdiction (EU's ex ante model as a pre-deployment evidentiary requirement, US standards-based model as a voluntary reference protocol, disclosure-based regimes attaching the claim registry to existing artefacts). The UK AI Security Institute, CAISI, and institutes in other jurisdictions can apply \TRACE{} as a coordination point for cross-jurisdictional audits; AISI's published sycophancy work is already structurally aligned with the framework's interaction-conditional findings. A future implementation is a multi-jurisdictional claim registry maintained collaboratively across institutes, producing comparable records of which safety claims have been audited, with what restrictions, by which institution.

\emph{Standing to define $\mu_D$.} The central governance question is who has standing to define $\mu_D$. Test-recognition is an information-asymmetry problem: if the developer sees both $\mu_T$- and $\mu_D$-conditioned behaviour while regulators see only the $\mu_T$ record, evaluation becomes safety theatre and a race-to-the-bottom incentive emerges. If $\mu_D$ is specified only by the developer, \TRACE{} reproduces the asymmetry it is meant to audit. Credible use therefore requires contestable $\mu_D$ specifications, disclosure of replay coverage, and participation by independent auditors or downstream institutions.

\section{Conclusion}

Frontier model evaluation no longer occurs in a neutral measurement environment. When models condition on evaluation contexts, evaluation evidence may still be informative, but the claims drawn from it require restriction. ED names the resulting test-deployment divergence; \TRACE{} turns it into a claim-restriction procedure.

The paper defined ED and $\mathrm{nED}$, proved that marginal evaluation scores cannot identify ED, introduced a typology of ED-stable, ED-degraded, ED-inverted, and ED-undetermined claims, and specified \TRACE{} as an audit wrapper for existing evaluation infrastructure. The retrospective applications show how public evidence already instantiates these categories: BrowseComp as ED-degraded prevalence, NLA as ED-undetermined with candidate inversion, and OpenAI / Apollo as structural ED-inversion risk.

\TRACE{} does not produce model-level safety guarantees, eliminate adversarial adaptation, replace post-deployment monitoring, or do alignment work. Its value is narrower: it makes explicit which deployment claims remain warranted once the conditions of measurement are known, and it does so in a form that lab authors, regulators, and external auditors can use without first agreeing on the underlying mechanism. The move from capability evaluation to claim evaluation is already visible in the public record; it now needs a shared vocabulary, a measurement target, and an audit procedure.

The framework's full value will be visible only once it is applied prospectively. The most immediate priority is a primary empirical audit, conducted with a frontier lab against a contestable $\mu_D$ specification and reported against the minimum requirements in Section~5; adjacent priorities are extending the audit to non-Anglophone and lower-resource deployment contexts, integrating \TRACE{} reports into system-card and conformity-assessment workflows, and developing the multi-jurisdictional claim registry proposed in Section~7.2. The labs that have surfaced BrowseComp, NLA, and the awareness-perturbation evidence have themselves produced much of what such audits require; the contribution of this paper is to specify, in measurement-theoretic terms, what those audits should report and which deployment claims their evidence does and does not warrant. The goal is not to discard frontier evaluations but to strengthen their evidential discipline by making the conditions of measurement, and the claims those conditions warrant, explicit.

\section*{Ethics, Positionality, and Adverse Impact Statement}

This paper proposes a methodology for restricting safety claims drawn from frontier-model evaluations. We make the ethical commitments and limitations of that contribution explicit here, in keeping with the AIES expectation that methodological work make its normative commitments visible.

\emph{Stakeholders and harms.} Misleading evaluation claims do not merely produce technical error; they shape regulatory approval, procurement, public trust, deployment commitments, and downstream institutional reliance. The parties who can be harmed when claims overstate their warrant include users acting on safety representations, institutions procuring systems on the basis of those representations, regulators issuing conformity decisions, and publics whose protection depends on the integrity of the safety claims attached to deployed systems. \TRACE{} is designed to make the warrant of such claims contestable; it is not a substitute for the accountability mechanisms by which harmed parties seek redress.

\emph{Positionality and evidence base.} The retrospective applications rely on publicly released frontier-laboratory disclosures (Anthropic's BrowseComp and NLA work, the OpenAI / Apollo anti-scheming work, and AISI's sycophancy findings). These disclosures are strategic acts by labs whose internal evidence is not fully observable, and our analysis inherits whatever selection bias the public-disclosure record carries. We treat the labs' published evidence as the best available substrate for retrospective audit, not as a neutral or complete picture of frontier-model behaviour under evaluation.

\emph{Power asymmetries.} The framework's central operational choice, specifying $\mu_T$ and $\mu_D$, is politically and institutionally contestable. If $\mu_D$ is defined unilaterally by the model developer, \TRACE{} risks becoming compliance theatre: the audit produces restricted claims that authorise whatever the developer was already prepared to authorise. We have argued in Section~7.2 that credible use requires contestable $\mu_D$ specifications, disclosure of replay coverage, and participation by independent auditors or downstream institutions. Whether such participation is achieved is not a methodological question; it is a question of institutional design and political will.

\emph{Adverse uses.} Three adverse uses are foreseeable. First, the framework could be selectively applied to claims that are convenient to restrict while leaving inconvenient claims unaudited; the claim-registry proposal in Section~7.2 partially addresses this by making the population of audited claims visible to downstream readers. Second, the vocabulary of restricted claims could be used to manufacture an appearance of audit rigour without the underlying behavioural replay; the reproducibility requirements (claim specification, $\mu_D$ construction, replay coverage, cue-ablation results) are intended to make such use detectable. Third, publishing the audit methodology itself is dual-use: a developer who knows which cues an audit will probe, and which evidentiary layers it consults, can train against detection rather than against the underlying conditioning. Section~7.1 discusses our partial response to this (versioned probes, behavioural-first ordering, graduated disclosure of held-out components), but the long-run trajectory is a cat-and-mouse dynamic in which the framework must evolve alongside the systems it audits. None of these protections is complete, and we do not present \TRACE{} as a guarantee against bad-faith adoption.

\emph{Empirical follow-up.} The retrospective cases in this paper draw on the public-disclosure record only. Future empirical work should estimate ED directly under matched deployment-continuous replays, ideally with the participation of independent auditors and institutions whose deployment contexts the audit purports to represent, rather than relying on retrospective analysis of lab-released evidence. The labs that have surfaced the BrowseComp, NLA, and OpenAI / Apollo evidence have themselves produced much of what such audits require; the natural next step is for that evidence to be paired with explicit claim specification, $\mu_D$ construction, and reporting against the requirements in Section~5.

\bibliographystyle{aaai2026}
\bibliography{references}

\begin{thebibliography}{15}
\providecommand{\natexlab}[1]{#1}

\bibitem[{{Anthropic}(2026{\natexlab{a}})}]{anthropic2026opus47}
{Anthropic}. 2026{\natexlab{a}}.
\newblock {Claude Opus 4.7} System Card.
\newblock Transparency Hub, Anthropic.
\newblock System card and safety report.

\bibitem[{{Anthropic}(2026{\natexlab{b}})}]{anthropic2026browsecomp}
{Anthropic}. 2026{\natexlab{b}}.
\newblock Eval Awareness in {Claude Opus 4.6}'s {BrowseComp} Performance.
\newblock
  \url{https://www.anthropic.com/engineering/eval-awareness-browsecomp}.
\newblock Engineering blog post, Anthropic.

\bibitem[{{Anthropic}(2026{\natexlab{c}})}]{anthropic2026nla}
{Anthropic}. 2026{\natexlab{c}}.
\newblock Natural Language Autoencoders.
\newblock
  \url{https://www.anthropic.com/research/natural-language-autoencoders}.
\newblock Research publication, Anthropic.

\bibitem[{Bengio et~al.(2026)}]{bengio2026isr}
Bengio, Y.; et~al. 2026.
\newblock International {AI} Safety Report 2026.
\newblock \emph{Department for Science, Innovation and Technology, UK
  Government}.
\newblock DSIT 2026/001.

\bibitem[{Berglund et~al.(2023)Berglund, Stickland, Balesni, Kaufmann, Tong,
  Korbak, Kokotajlo, and Evans}]{berglund2023situational}
Berglund, L.; Stickland, A.~C.; Balesni, M.; Kaufmann, M.; Tong, M.; Korbak,
  T.; Kokotajlo, D.; and Evans, O. 2023.
\newblock Taken Out of Context: On Measuring Situational Awareness in {LLMs}.
\newblock In \emph{NeurIPS Workshop on Socially Responsible Language Modelling
  Research}.

\bibitem[{Cohen(1988)}]{cohen1988}
Cohen, J. 1988.
\newblock \emph{Statistical Power Analysis for the Behavioral Sciences}.
\newblock Lawrence Erlbaum Associates, 2 edition.

\bibitem[{Cronbach and Meehl(1955)}]{cronbach1955}
Cronbach, L.~J.; and Meehl, P.~E. 1955.
\newblock Construct Validity in Psychological Tests.
\newblock \emph{Psychological Bulletin}, 52(4): 281--302.

\bibitem[{Jacobs and Wallach(2021)}]{jacobs2021}
Jacobs, A.~Z.; and Wallach, H. 2021.
\newblock Measurement and Fairness.
\newblock In \emph{Proceedings of the 2021 ACM Conference on Fairness,
  Accountability, and Transparency (FAccT)}, 375--385.

\bibitem[{Manheim and Garrabrant(2018)}]{manheim2018goodhart}
Manheim, D.; and Garrabrant, S. 2018.
\newblock Categorizing Variants of {Goodhart}'s Law.
\newblock In \emph{arXiv preprint arXiv:1803.04585}.

\bibitem[{Messick(1995)}]{messick1995}
Messick, S. 1995.
\newblock Validity of Psychological Assessment: Validation of Inferences from
  Persons' Responses and Performances as Scientific Inquiry into Score Meaning.
\newblock \emph{American Psychologist}, 50(9): 741--749.

\bibitem[{Mitchell et~al.(2019)Mitchell, Wu, Zaldivar, Barnes, Vasserman,
  Hutchinson, Spitzer, Raji, and Gebru}]{mitchell2019modelcards}
Mitchell, M.; Wu, S.; Zaldivar, A.; Barnes, P.; Vasserman, L.; Hutchinson, B.;
  Spitzer, E.; Raji, I.~D.; and Gebru, T. 2019.
\newblock Model Cards for Model Reporting.
\newblock In \emph{Proceedings of the Conference on Fairness, Accountability,
  and Transparency (FAT*)}, 220--229.

\bibitem[{{National Institute of Standards and Technology}(2023)}]{nist2023rmf}
{National Institute of Standards and Technology}. 2023.
\newblock {AI} Risk Management Framework ({AI RMF 1.0}).
\newblock NIST AI 100-1.
\newblock U.S. Department of Commerce.

\bibitem[{{OpenAI} and {Apollo Research}(2026)}]{openai2026scheming}
{OpenAI}; and {Apollo Research}. 2026.
\newblock Detecting and Reducing Scheming in {AI} Models.
\newblock
  \url{https://openai.com/index/detecting-and-reducing-scheming-in-ai-models/}.
\newblock OpenAI research publication.

\bibitem[{Raji et~al.(2020)Raji, Smart, White, Mitchell, Gebru, Hutchinson,
  Smith-Loud, Theron, and Barnes}]{raji2020auditing}
Raji, I.~D.; Smart, A.; White, R.~N.; Mitchell, M.; Gebru, T.; Hutchinson, B.;
  Smith-Loud, J.; Theron, D.; and Barnes, P. 2020.
\newblock Closing the {AI} Accountability Gap: Defining an End-to-End Framework
  for Internal Algorithmic Auditing.
\newblock In \emph{Proceedings of the 2020 Conference on Fairness,
  Accountability, and Transparency (FAT*)}, 33--44.

\bibitem[{{UK AI Security Institute}(2026)}]{aisi2026sycophancy}
{UK AI Security Institute}. 2026.
\newblock {Ask, Don't Tell}: Reducing Sycophancy in Large Language Models.
\newblock AISI Blog.
\newblock UK AI Security Institute research note.

\end{thebibliography}

\end{document}